\definecolor{indiagreen}{rgb}{0.07, 0.53, 0.03}
\renewcommand{\Indentp}[1]{%
  \advance\leftskip by #1
  \advance\skiptext by -#1
  \advance\skiprule by #1}%
\renewcommand{\Indp}{\algocf@adjustskipindent\Indentp{\algoskipindent}}
\renewcommand{\Indm}{\algocf@adjustskipindent\Indentp{-\algoskipindent}}
\newcommand{\R}{\mathbb{R}}
\newcommand{\N}{\mathbb{N}}
\newcommand{\X}{\mathcal{X}}
\newcommand{\x}{\boldsymbol{x}}
\newcommand{\E}{\mathbb{E}}
\newtheorem{thm}{Theorem}
\begin{document}
% \title{LDA-XGB1: An Interpretable and Fair XGBoost Framework for Binary Classification}
\title{Less Discriminatory Alternative and Interpretable XGBoost Framework for Binary Classification}
\author[1]{Andrew Pangia}
\author[1,2]{Agus Sudjianto}
%\author[3]{Raghu Kulkarni}
\author[3]{Aijun Zhang\thanks{The views expressed in this paper are those of the authors and do not necessarily reflect those of Wells Fargo or H2O.ai.}}
\author[1]{Taufiquar Khan\thanks{Correspondence: aijun.zhang@wellsfargo.com, taufiquar.khan@charlotte.edu}}
\affil[1]{UNC Charlotte}
\affil[2]{H2O.ai}
%\affil[3]{Equifax Inc.}
\affil[3]{Wells Fargo}
\renewcommand\Affilfont{\itshape\small}
\date{} % September 2024
\maketitle
 
\begin{abstract}%   
Fair lending practices and model interpretability are crucial concerns in the financial industry, especially given the increasing use of complex machine learning models. In response to the Consumer Financial Protection Bureau's (CFPB) requirement to protect consumers against unlawful discrimination, we introduce LDA-XGB1, a novel less discriminatory alternative (LDA) machine learning model for fair and interpretable binary classification. 
LDA-XGB1 is developed through biobjective optimization that balances accuracy and fairness, with both objectives formulated using binning and information value. It leverages the predictive power and computational efficiency of XGBoost while ensuring inherent model interpretability, including the enforcement of monotonic constraints.
We evaluate LDA-XGB1 on two datasets: SimuCredit, a simulated credit approval dataset, and COMPAS, a real-world recidivism prediction dataset. Our results demonstrate that LDA-XGB1 achieves an effective balance between predictive accuracy, fairness, and interpretability, often outperforming traditional fair lending models. This approach equips financial institutions with a powerful tool to meet regulatory requirements for fair lending while maintaining the advantages of advanced machine learning techniques.
\vskip 6.5pt \noindent {\bf Keywords}: Fairness, Interpretability, Binning, Information value,  Less discriminatory alternative, Mixed-integer programming,  Monotonicity, XGBoost.
\end{abstract}

\section{Introduction}
The financial industry increasingly relies on complex machine learning models for critical decision-making processes, such as credit scoring, loan approvals, and risk assessments. Though these models substantially improve predictive accuracy and efficiency, they also bring two major concerns: fairness and interpretability.

To address these concerns, the Consumer Financial Protection Bureau (CFPB) has enacted stringent regulations that require lenders to uphold fair lending practices and provide clear explanations for their decisions. On the fairness front, the CFPB mandates that lenders actively search for and adopt less discriminatory alternatives (LDAs). This ensures that models do not unfairly disadvantage protected groups in decisions like credit approval. Simultaneously, the adverse action reason code (AARC) requirement emphasizes model interpretability. Lenders must provide clear, specific, and understandable reasons for rejecting loan applications. This is crucial not only for transparency but also to meet legal obligations and maintain customer trust. 

The dual mandates of fairness and interpretability present a formidable challenge for financial institutions. Traditionally, machine learning models have been favored for their high accuracy and efficiency in solving binary classification problems. However, their use in regulated industries has raised concerns due to the complexity and potential for bias in decision-making. As a result, there is a pressing need for new approaches that strike a balance between accuracy, fairness, and interpretability.

In response to this challenge, we propose LDA-XGB1, a novel framework designed to meet both fairness and interpretability requirements while maintaining strong predictive performance. LDA-XGB1 builds upon the widely used XGBoost algorithm but ensures inherent interpretability by using shallow trees of depth 1, making each feature’s contribution to the decision easily understandable. To further enhance interpretability, monotonic constraints can be imposed on key features, ensuring logical and transparent behavior in predictions. LDA-XGB1 employs biobjective optimization that balances accuracy and fairness, incorporating XGB1-based binning techniques and information value to measure and mitigate potential disparate impact. Disparate impact occurs when a seemingly neutral model disproportionately affects a protected group (e.g., race, gender) without discriminatory intent. This ensures equitable treatment for protected groups in the decision-making process. As a result, the proposed LDA-XGB1 model offers a comprehensive solution that balances accuracy, fairness, and interpretability, making it highly applicable for compliance with regulatory requirements.

The rest of this paper is structured as follows. In Section 2, we provide a detailed review of related work, focusing on fairness-enhancing methods and interpretability techniques in machine learning. Section 3 outlines the methodological framework of LDA-XGB1, including its biobjective optimization and fairness metrics. In Section 4, we describe the datasets used for evaluation and present experimental results demonstrating the model's effectiveness in balancing accuracy, fairness, and interpretability. Finally, Section 5 discusses the implications of our findings for financial institutions and concludes with future directions for research.

\section{Background and Related Work}
\subsection{Interpretability in Machine Learning} 
In high-stakes domains like finance, model interpretability is essential for building trust, ensuring compliance with regulatory standards, and comprehending model behavior. With advancements in machine learning, two primary approaches have emerged: post-hoc explainability techniques and inherently interpretable models.

Post-hoc techniques aim to explain the behavior of complex, often black-box, models after they have been trained. Popular methods such as LIME (Locally Interpretable Model-Agnostic Explanations) \citep{ribeiro2016lime} and SHAP (SHapley Additive exPlanations) \citep{lundberg2017shap}, provide model-agnostic local explanations by approximating complex models with simpler surrogate models. However, studies have shown that post-hoc explanations may not always faithfully reflect the decision-making process of the underlying model, leading to misleading or incomplete interpretations \citep{kumar2020prob, bilodeau2024impossibility}. Moreover, the added layer of complexity in post-hoc analysis can be difficult to justify under regulatory scrutiny, especially in financial services where transparency is critical. These explanations are approximations, and there is no guarantee that they faithfully capture the complex interactions present in the black-box model. See also CFPB Circular 2022-03 \citep{CFPB2022-03} for a footnote comment that creditors must ensure the accuracy of any post-hoc explanations, as such approximations may not be viable with less interpretable models. 

In contrast to post-hoc methods, inherently interpretable models are designed to be transparent by nature \citep{sudjianto2021design}. These models ensure that their structure and behavior can be directly understood by users. One of the most prominent frameworks in this regard is the generalized additive model (GAM), which was first introduced by \cite{hastie1990} and later re-introduced for interpretable machine learning by \citep{vaughan1018xnn, yang2020exnn, agarwal2021nam}. GAMs provide a balance between flexibility and interpretability by assuming the input-output $(\x,y)$ relationship to be of the form
\begin{equation}
\sigma(\E(y|\x))  = g_0 + \sum_{j=1}^p g_j(x_j), 
\end{equation}
where $\sigma$ is the link function, $g_0$ is the overall mean and $g_{j}(x_j)$'s are the main effects of individual features. This additive structure allows each feature's contribution to the prediction to be independently analyzed and understood, making GAM models inherently interpretable and easy to visualize. Recent extensions of GAMs include GAMI-Net \citep{yang2021gami} with structured pairwise interactions based on neural networks and GAMI-Lin-Tree \citep{hu2023fanova} based on model-based trees. 

In summary, GAM and its extensions represent an ideal model structure for interpretable machine learning in regulated industries. Unlike post-hoc methods that attempt to explain complex models, GAMs are inherently interpretable by design, making them highly suitable for scenarios where model transparency and simplicity are paramount. 

\subsection{Fairness in Machine Learning}\label{sec:fairML}
Fairness has become a critical concern in the application of machine learning models, particularly in sectors such as finance, healthcare, and criminal justice, where decisions can significantly impact individuals' lives. In machine learning, fairness refers to the principle that models should not disproportionately harm or disadvantage specific groups defined by protected attributes such as race, gender, or socioeconomic status. \cite{caton2024fairnss} provides a comprehensive overview of fairness in machine learning, exploring the various definitions, metrics, and methodologies for integrating fairness into model development. 

In the context of model fairness, protected groups and reference groups are defined as follows:
\begin{itemize}
\item {\sf Protected Groups:} These are groups of individuals who are safeguarded against discrimination and bias based on legally recognized characteristics, also known as protected attributes. These attributes often include race, gender, age, religion, disability, sexual orientation, and ethnicity. Fairness in machine learning aims to ensure that models do not disadvantage these protected groups in their predictions, recommendations, or outcomes. For example, when evaluating the fairness of a credit decision algorithm, women or certain racial minorities might constitute a protected group if the concern is that the algorithm could unfairly disadvantage them.
\item {\sf Reference Groups:} These are groups against which the protected groups are compared when assessing fairness. They typically represent the dominant or non-disadvantaged population in relation to a specific attribute. For example, in gender-based fairness assessments, men might serve as the reference group when evaluating fairness toward women. The reference group serves as a benchmark for fairness, helping to identify disparities in outcomes between different groups.
\end{itemize}

In fairness assessments, models are examined to ensure that protected groups receive fair and equitable outcomes compared to reference groups, helping to prevent biases that can arise from societal inequalities being reflected in algorithmic decisions. Techniques such as fairness metrics, bias mitigation strategies, and group-based audits are used to measure and address these disparities.
The concept of Less Discriminatory Alternatives (LDA) is crucial as it underscores the obligation of financial institutions to actively seek out and implement strategies that reduce discrimination against protected groups. This proactive approach aligns with the requirements established by the Consumer Financial Protection Bureau (CFPB), which mandates that lenders identify and adopt less discriminatory alternatives whenever feasible. \cite{gillis2024lda} discusses the search for LDA in fair lending, providing a method to audit models for less discriminatory alternatives. Their approach allows regulators, auditors, and lenders to systematically discover alternative models that reduce discrimination or refute the existence of such alternatives.  

One effective approach to ensure fairness in machine learning is the incorporation of fairness constraints directly into the model's design and training process \citep{donini2018fairnss, zafar2019fairnss}. These constraints help to align the model's predictions with ethical considerations and regulatory requirements, ensuring that protected groups are treated equitably. 
In model development, fairness constraints can be mathematically formulated to minimize disparate impacts across different protected groups. For instance, one might impose constraints on the false positive rates or true positive rates to ensure that these rates are comparable across groups. By integrating these fairness constraints into the optimization process, models can be trained to balance predictive accuracy with fairness, effectively reducing bias in decision-making. Note that such fairness optimization problems can often be equivalently formulated as biobjective or multi-objective programming (MOP) tasks also known as $\epsilon$-constraint scalarization \citep{haimes1971mop}, where 
fairness objectives are explicitly considered alongside performance objectives. This approach enables model developers to explore trade-offs and find solutions that provide both high accuracy and improved fairness.

In summary, biobjective or multi-criteria optimization that integrates fairness objectives has proven to be an effective approach in developing LDA for machine learning models, particularly in the domain of fair lending. 

\section{Methodology}
In this section, we outline the development of LDA-XGB1, a novel framework designed to balance fairness and accuracy while ensuring model interpretability. LDA-XGB1 builds upon the widely-used XGBoost algorithm, but it incorporates several enhancements to address the dual objectives of fairness and interpretability. The model employs biobjective programming that balances predictive performance and fairness, with built-in mechanisms to ensure that decisions remain transparent and understandable. 

\subsection{XGB1 Base Model}
XGB1 is an inherently interpretable model advocated by  PiML toolbox \citep{sudjianto2023piml} and it serves as the base model for the proposed method. XGB1 is built upon the widely used XGBoost framework \citep{chen2016xgboost}, a powerful machine learning algorithm that utilizes gradient-boosted decision trees. XGBoost (Extreme Gradient Boosting) is known for its high accuracy, speed, and efficiency, making it a popular choice for various predictive tasks, including classification and regression.

The XGB1 base model employs depth-1 trees (also known as decision stumps) during the boosting process. A key characteristic of this model is its interpretability, as it can be viewed as a Generalized Additive Model (GAM) with piecewise constant main effects. The following procedure outlines how XGB1 achieves this interpretability:
\begin{itemize}
\item {\sf Collecting Unique Splits for Each Feature:}
In each boosting step, XGB1 adaptively selects a feature and determines the best split point. For a given feature, the collected set of unique splits (across different boosting steps) would divide the feature into discrete intervals (bins), with each interval representing a specific range of values.
\item {\sf Calculating Accumulated Leaf Node Values:}
After determining the split points for each feature, the accumulated leaf node values for each bin generated by the unique splits are calculated. These values correspond to the contribution of each bin toward the final prediction.
\end{itemize}

The key feature of XGB1 is its ability to adaptively select split points, which enables the model to capture feature behavior in a data-driven manner. This adaptive selection allows XGB1 to achieve superior predictive performance compared to spline-based GAMs, which rely on pre-defined smooth functions for feature effects. 

\medskip
\noindent {\bf Support for Regularization:} 
To prevent overfitting and enhance generalization ability, XGBoost incorporates built-in regularization mechanisms, including L1 (lasso) and L2 (ridge) regularization. These regularization techniques penalize large coefficients in the model, ensuring a balance between complexity and performance.  

\medskip
\noindent {\bf Support for Monotonic Constraints:} 
XGBoost supports the implementation of monotonic constraints, which ensures that the relationship between specific features and the model's predictions remains consistent with domain knowledge. For example, in financial applications, it is expected that an increase in income should not lead to a decrease in the likelihood of loan approval. By imposing these constraints during the training process, XGB1 guarantees that predictions follow logically consistent patterns.

\subsection{Binning and Information Value}\label{sec:IV}
Binning and Information Value (IV) are fundamental tools in assessing the predictive strength of explanatory variables in models, especially in binary classification tasks such as credit scoring or risk modeling. These methods help us evaluate how well input variables (features) distinguish between two outcome classes (e.g., default vs. non-default). Importantly, they can also be adapted to promote fairness by minimizing the influence of variables that may cause disparate impacts to protected groups.

\paragraph{Binning Process.} 
Binning refers to dividing continuous variables into discrete intervals, or ``bins", which group similar values together. This simplification helps reduce noise while preserving the variable’s predictive power. We begin the binning process by using XGB1 base model. These initial bins serve as a starting point for understanding the distribution of values.

To further refine the binning process, we utilize \emph{optbinning}, an optimization technique based on mixed-integer programming. This method identifies the best binning strategy by maximizing a measure known as Jeffreys’ Divergence, which reflects how well the variable differentiates between two classes. In predictive modeling, this divergence is commonly represented as IV.

\paragraph{IV Calculation.}
IV measures the strength of a variable’s ability to distinguish between different outcomes. It is calculated by comparing the proportion of ``events'' (e.g., defaults) and ``non-events'' (e.g., non-defaults) within each bin. The formula for IV is:
\begin{align}\label{IV}
IV=\sum_{i=1}^n(p_i-q_i)\log\left(\frac{p_i}{q_i}\right), 
\end{align}
where
\[p_i=\dfrac{r^{E}_{i}}{r^{E}_T}, \quad q_i=\dfrac{r^{NE}_{i}}{r^{NE}_T},\] and
\begin{itemize}
    \item $r^E_i$ is the number of events in bin $i$
    \item $r^{NE}_i$ is the number of non-events in bin $i$
    \item $r^E_T$ is the total number of events
    \item $r^{NE}_T$ is the total number of non-events.
\end{itemize}
This formula captures the difference between the event and non-event distributions across the bins, with larger differences contributing more to IV. Higher IV indicates that the variable is better at separating the two outcomes, making it more predictive.

\paragraph{Interpreting IV.}
The general rule for interpreting IV \citep{Mironchyk2017, siddiqi2006credit} is
\begin{enumerate}
    \item $IV<0.02$: the variable has little to no predictive power.
    \item $0.02\leq IV <0.1$: the variable provides weak predictive power.
    \item $0.1\leq IV <0.3$: the variable has moderate predictive power.
    \item $IV\geq 0.3$: the variable is highly predictive.
\end{enumerate}
This ranking helps us determine which variables are most useful in a model, allowing us to focus on features that provide the greatest predictive value.

\paragraph{Ensuring Fairness -- Minimizing IV.} 
While maximizing IV for binary outcomes is essential for improving model accuracy, fairness considerations require a different approach for certain variables that may cause disparate impacts. As discussed in Section~\ref{sec:fairML}, fairness means that the model should eliminate or reduce the influence of predictive variables with disparate impacts to protected groups. 
% In this context, as we replace the binary outcomes by protected groups, our goal is to minimize the IV for protected groups. We defer such kind of IV definition to Section~\ref{sec:FairnessMetric}. 
%
% Thus, it creates a balance between
% \begin{itemize}
%     \item Maximizing the IV for binary outcomes (to improve accuracy),
%     \item Minimizing the IV for protected groups with disparate impacts (to reduce bias and promote fairness).
% \end{itemize}
% By minimizing the IV for protected groups, we ensure that the protected groups play a minimal role in model decision-making, promoting fairness and reducing potential bias in the model.
In this context, our goal is to minimize the IV for such variables with respect to their ability to discriminate between reference and protected groups in such a way they don’t unduly influence model predictions.

Thus, it creates a balance between
\begin{itemize}
    \item Maximizing the IV for explanatory variables to improve accuracy,
    \item Minimizing the IV for explanatory variables with disparate impacts to reduce bias and promote fairness.
\end{itemize}
By minimizing the IV of variables with disparate impacts, we ensure that they play a minimal role in model decision-making, promoting fairness and reducing potential bias in the model.

% \paragraph{Optimizing Bins for Accuracy and Fairness.}
The \emph{optbinning} algorithm \citep{navas2020optimal} helps us achieve this balance by optimizing the binning process for each explanatory variable. The goal is to merge or adjust bins in ways that increase IV for binary outcomes, for improving  the model’s predictive ability. However, for fairness, the strategy is the opposite: the binning is optimized to reduce IV of variables with disparate impacts. 

This dual-objective binning process helps ensure that models are both accurate and fair. For example, a variable like debt to income ratio might have its bins adjusted to increase IV and better predict risk, while a variable like mortgage size would have its bins merged or adjusted to reduce IV, preventing it from having a large influence on the predictions.

% \paragraph{Application in Practice.} 
%By using the combined optbinning approach, we can achieve both strong model performance and fairness. In applications such as credit scoring or risk management, ensuring that variables which impact fairness have minimal predictive power helps align models with regulatory and ethical standards. At the same time, maximizing the IV of explanatory features ensures that the model remains highly predictive.

In summary, binning and IV are not just tools for improving model accuracy, they also play a crucial role in ensuring fairness. By carefully managing the IV's for both binary outcomes and protected groups, we can build models that are both effective and equitable.

\subsubsection{Accuracy Metric}
% {\color{blue} Here is to define $f_{\rm acc}(\x)$ ... include the derivation in the appendix ... }
Consider a set of $n\in\N$ bins. For any two specific bins $i$ and $j$, let $X_{ij}\in\{0,1\}$ such that $X_{ij}=1$ indicates that bins $i$ through $j$ are merged, while $X_{ij}=0$ indicates otherwise. Refer to \citep{navas2020optimal} for the commonsense constraints on $X_{ij}$ such that noncontiguous bins cannot be merged without all the bins between them, and a single pre-bin cannot be merged into multiple superbins.
With the definition of such binary variables $X_{ij}$, the IV becomes the following function of $X$:
\begin{align}
    IV:X\mapsto \sum_{i=1}^n\left(\left(\sum_{j=1}^i(p_j-q_j)X_{ij}\right)\log\left(\dfrac{\sum_{j=1}^i p_jX_{ij}}{\sum_{j=1}^i q_jX_{ij}}\right)\right). \label{IV2}
\end{align}
Note in \eqref{IV2} that the interior of the summation is the IV over each merged bin, even though, for some integer $k< i$, $X_{ij}=0$ for all $j\leq k$.

We follow \cite{navas2020optimal} to define $V_{ij}$ as the IV for a single superbin containing the $j$th through $i$th bins for all $j\leq i$,
\begin{align}\label{IV3}
    V_{ij}:=\left(\sum_{z=j}^i(p_z-q_z)\right)\log\left(\dfrac{\sum_{z=j}^i p_z}{\sum_{z=j}^i q_z}\right). 
\end{align}
Then, construct a linear representation of $X$ as the accuracy metric for binary classification,
\begin{align}\label{objILP}
f_{\rm acc}(X):=\sum_{i=1}^n \left(V_{ii}X_{ii} + \sum_{j=1}^{i-1}(V_{ij}-V_{ij+1})X_{ij}\right).
\end{align}
Note that the second summation is required, in order to remove IVs for bins that have been merged. In Appendix we provide a thorough demonstration of this result as a theorem, along with a technical proof by induction.

\subsubsection{Fairness Metric}\label{sec:FairnessMetric}
Denote by $y_A$ the indicator of the reference group (0) or the protected group (1). %be (e.g., race or gender) to be a categorical group indicator.  For simplicity, we consider binary $y_A$ to indicate protected group vs. reference group. 
For an explanatory variable used in model development, in the process of merging bins, this time we want to minimize the IV with respect to $y_A$. That is, the model should not be able to discriminate between reference and protected groups. In the similar fashion, define $r_T^{AE}$ to be the total number of events in the protected group (i.e., $y_A = 1$), $r_T^{NAE}$ to be the total number of events in the reference group (i.e., $y_A = 0$). For each bin $i$, let $r^{AE}_i$, $r^{ASE}_i$  be the number of protected events and the number of reference events in bin $i$, respectively. Then, write 
\begin{align}
    p^{A}_i=\dfrac{r^{AE}_{i}}{r^{AE}_T}, \quad q^{A}_i=\dfrac{r^{NAE}_{i}}{r^{NAE}_T}. \label{fairDists}
\end{align}
Similar to (\ref{IV3}), by expressing
\[
    V^{A}_{ij}:=\left(\sum_{z=j}^i(p^{A}_z-q^{A}_z)\right)\log\left(\dfrac{\sum_{z=j}^i p^{A}_z}{\sum_{z=j}^i q^{A}_z}\right),
\]
we can define the following fairness metric,
\begin{align}\label{fairObj}
f_{\rm fair}(X)=\sum_{i=1}^n \left(V^{A}_{ii}X_{ii} + \sum_{j=1}^{i-1}(V^{A}_{ij}-V^{A}_{ij+1})X_{ij}\right).
\end{align}
It is important to note that, while we maximize $f_{\rm acc}(X)$ to obtain as much information about the target as possible, we want to minimize 
$f_{\rm fair}(X)$ to ensure that the binning yields as little information as possible. 

\subsection{Biobjective Programming}
When balancing two objective functions -- accuracy and fairness, we utilize biobjective programming (BOP) techniques, which can be expressed as follows:
\begin{align}
\min_{X\in \X} \;\left[-f_{\rm acc}(X),\; f_{\rm fair}(X)\right] \tag{BOP}\label{bopBasic}
\end{align}
where $f_{\rm acc},f_{\rm fair}:\R^n\to\R$ are the accuracy and fairness metrics, and $\X\subset\{0,1\}^{n\times n}$ is the feasible set defined by a finite list of constraints. The solutions to \eqref{bopBasic} are the points $x$ which result in the best $f_{\rm acc}(x)$ or $f_{\rm fair}(x)$. To find these, we implement the $\epsilon$-constraint method \cite{haimes1971mop}.

The $\epsilon$-constraint method exploits the fact that the two objective functions conflict by incorporating one objective function as a constraint. Let $\epsilon\in f_{\rm fair}(\X)$; we choose to optimize $f_{\rm acc}$ with the fairness constraint:
\begin{align}
\max_{\x\in \X} &\;f_{\rm acc}(X) \nonumber\\
\text{S.t.} &\;f_{\rm fair}(X)\leq \epsilon \tag{$\epsilon$Con}\label{eSop}
\end{align}
Let $X^{1*}, X^{2*}\in\X$ minimize $-f_{\rm acc}(X)$ and $f_{\rm fair}(X)$ respectively over $\X$. By adjusting the FIV (fairness IV) bound
$$
\epsilon\in[f_{\rm fair}(X^{2*}), f_{\rm fair}(X^{1*})],
$$
we are able to obtain all solutions to \eqref{bopBasic}. Frequently, the solutions obtained are plotted against one another in what is called a Pareto front so as to demonstrate the trade-off. In practice, the choice of $\epsilon$ is typically selected based on some previously determined thresholds.

\subsection{The Proposed LDA-XGB1}
The proposed algorithm for constructing the model sequence and the approach to demonstrate interpretability for the resulting model are explained below. We begin by developing a binary classification ML model, with the option of applying increasing and decreasing monotonicity constraints as appropriate to aid interpretability, followed by construction of the feature importance and main effect  plots for model interpretation. 

\subsubsection{Optimization Algorithm}
Suppose we have a dataset consisting of samples $(\x, y, y_A)$ where $x\in \R^{k}$ are $k$-dimensional predictors, $y\in\{0,1\}$ is the target, and $y_A\in\{0,1\}$ is the protected group indicator. Partition the dataset into training and testing sets and begin by applying XGB1 base model \citep{sudjianto2023piml} to the training set. This provides us with sets of prebins for each predictor.

\begin{algorithm}[ht!] % enter the algorithm environment
\SetNlSty{text}{Step }{:} %Set up the line numbers
\SetAlgoNlRelativeSize{0}
\caption{LDA-XGB1} \label{alg:fairSequence} 
\SetKwInOut{Input}{Input}
\SetKwInOut{Output}{Output}
\Input{Dataset $\{(\x,y,y_A)\}$, FIV bound $\bar\epsilon$}
\Output{BinLogistic}
{
    \nlset{$0$} Construct train/test split
    
    \nlset{$1$} Apply XGB1 to get prebins for each predictor $x_i$, $i\in\{1,\dots,k\}$ \label{sequenceXGB} \\
%    \Indp ascending/descending monotonicity may be included
    
    \nlset{$2$} \For{$i\in\{1,\dots,k\}$}{Merge prebins for maximizing $f_{\rm acc}(X)$ by  \\
    \Indp \uIf{fairness constraint}
        {\color{black} Program \eqref{eSop} using $f_{\rm fair}(X)\leq  \bar\epsilon$}
    \uElse
        {Program \eqref{eSop} omitting $f_{\rm fair}(X)$ constraint % (\cite{navas2020optimal})
        }
    }\label{sequenceIVOpt}

    \nlset{$3$} Fit a binning logistic regression model based on the optimally merged bins upon one-hot encoding \label{sequenceGLM} %\\
%    \Indp where ascending/descending monotonicity may be included
}
\end{algorithm}

Algorithm~\ref{alg:fairSequence} outlines the key algorithmic steps. Step 2 is to optimally merge prebins for each predictor by \eqref{eSop} with or without the fairness constraint according to the input FIV bound $\bar\epsilon$. Without the fairness constraint, it is identical to the default optbinning approach \citep{navas2020optimal}, which serves as the baseline for model comparison.  In Step 3, it fits a binning logistic regression model based on the optimized superbins upon the use of one-hot encoding. Finally, the LDA-XGB1 algorithm outputs such a BinLogistic model. 

Note that in both Steps 1 and 3, the monotone constraints (i.e., ascending/descending trends) can be imposed on some key variables in order to achieve proper interpretability. Step 1 takes the advantage of XGBoost framework in XGB1 model training with monotone constraints, while in Step 3 we use the {\sf optimize} function in Python Scipy to compute for a constrained logistic regression model. 
% The importance of the monotonicity constraints for interpretation is shown in the numerical section below for both simulated data SimuCredit and real world data COMPAS.

\subsubsection{Model Interpretation}
To interpret the resulting BinLogistic model from the LDA-XGB1 algorithm, we present another algorithm of model interpretability. Following \citep{yang2021gami},  we develop the procedures to generate the FI (feature importance) and ME (main-effect) plots. The ME plot visualizes the  univariate function for each predictor to display the partial input-output relationship. For LDA-XGB1 based on optimal superbins, each ME plot is a step function with piece-wise constant values.  The FI plot is presented by a bar chart, with each predictor's importance index quantified as the sample variance of the step function. See Algorithm~\ref{alg:FI} for the concrete steps. 

\begin{algorithm}[ht!] % enter the algorithm environment
\SetNlSty{text}{Step }{:} %Set up the line numbers
\SetAlgoNlRelativeSize{0}
\caption{LDA-XGB1 Interpretability} % give the algorithm a caption
\label{alg:FI} % and a label for \ref{} commands later in the document
\SetKwInOut{Input}{Input}
\SetKwInOut{Output}{Output}
\Input{BinLogistic}
\Output{FI and ME plots}
{
    \nlset{$1$} Extract model coefficients
        
    \nlset{$2$} \For{each feature}{
    \For{each bin $E_{ik}$}{
    % absolute value of coefficient, $|c_{ik}|$
    Assign the coefficient $c_{ik}$ as the piece-wise constant function value
    }
    % Sort bins and concatenate the function values
    Output the step function
    }
    \nlset{$3$} Draw the step function as the main effect plot for each feature\\
    \nlset{$4$} Calculate the sample variance of the step function as the importance index for each feature, then draw the feature importance plot
    }
\end{algorithm}

\section{Numerical Experiments}
In this section, we demonstrate the efficacy of the proposed algorithm using both simulated and real world datasets. We consider two datasets, a simulated credit dataset, SimuCredit, from \cite{sudjianto2023piml}, as well as the well-known, more complicated, COMPAS dataset, obtained from \cite{tempeh}. For each dataset, we present the LDA-XGB1 model from Algorithm \ref{alg:fairSequence}. We consider the main effect plots to observe the behaviour implied by this model, add monotone constraints to induce sensible behaviour, and follow up by considering fairness.

To confirm the performances of the resulting LDA-XGB1 models, we employ the widely used metrics, AUC (area under the ROC curve) to measure the accuracy, and  AIR (adverse impact ratio) to measure the fairness, with the latter defined as follows: 
\begin{equation}\label{AIR}
AIR=\dfrac{(TP_1+FP_1)}{(TP_0+FP_0)}.
\end{equation}
Note that the AIR  compares the number of times $y=1$ is accepted against the number of times $y=0$ is accepted. The goal is to have the AIR as close to 1 as possible. 
% However, the AIR cannot be used on its own, since a deliberate increase of false positives will achieve the desired effect.

\subsection{SimuCredit}\label{sec:simu}
SimuCredit is a simulated credit approval dataset from the PiML toolbox \cite{sudjianto2023piml}. It contains the following nine features:
\begin{enumerate}\setlength\itemsep{0em}
    \item {\sf Mortgage:} The original amount of the mortgage payment
    \item {\sf Balance:} The amount left on the mortgage payment
    \item {\sf Amount Past Due:} the amount of money in missed payments
    \item {\sf Delinquency:} the amount of time spent with missed payments
    \item {\sf Inquiry:} The number of applications for credit which required companies to check the credit score of the datapoint
    \item {\sf Open Trade:} The number of accounts on which payments have to be made (credit cards, car payments, etc.)
    \item {\sf Utilization:} the ratio of balance to credit limit 
    \item {\sf Gender:} binary variable denoting male or female
    \item {\sf Race:} binary variable denoting black and white
\end{enumerate}
The target variable is {\sf Status}, with 1 being approved and 0 otherwise. In this experiment, we consider {\sf Race} as the protected group indicator. As a standard practice in fair lending,  we remove both {\sf Race} and {\sf Gender} in model development, so that there will be no disparate treatment (intentional discrimination). Even when the demographic variables are not used in the modeling process, the model can still have disparate impacts (unintentional discrimination) caused by other variables. Thus, there is need to search for less discriminatory alternatives.
%\Andrew{I want to re-run this with Gender dropped from the feature set---I just don't think it's honest to use right now.}

We split the data into training and test sets, using SKlearn's `train\_test\_split' function with a constant random seed (23) for reproducibility. We employ Algorithm~\ref{alg:fairSequence} using the XGBoost.XGBClassifier method with max\_depth = 1, n\_estimators = 1000 , learning rate = 0.3, tree method = `auto', max\_bins = 256, reg\_lambda = 1, and reg\_alpha = 0. 

We run three rounds of the LDA-XGB1 algorithm with the following settings:
\begin{itemize}
\item[1)] {\sf Plain XGB1 Model.} % \label{sec:simuNoMono}
We applied Algorithm~\ref{alg:fairSequence} without fairness or monotone constraints. It fits a plain XGB1 model with optbinning for merging prebins. The feature importance and main effect plots are shown in Figures~\ref{fig:simuNoMonoFI} and \ref{fig:simuNoMonoME}, respectively. The main effect plots show apparent monotonic patterns for {\sf Utilization}, {\sf Balance} and {\sf Mortgage}. 
% While the model without monotonicity is interpretable in theory, it doesn't quite match reality: for instance, we would expect that an increased Balance would result in an increase to the chance of default. 

\begin{figure}[htp!]
    \centering
    \includegraphics[width=0.6\linewidth]{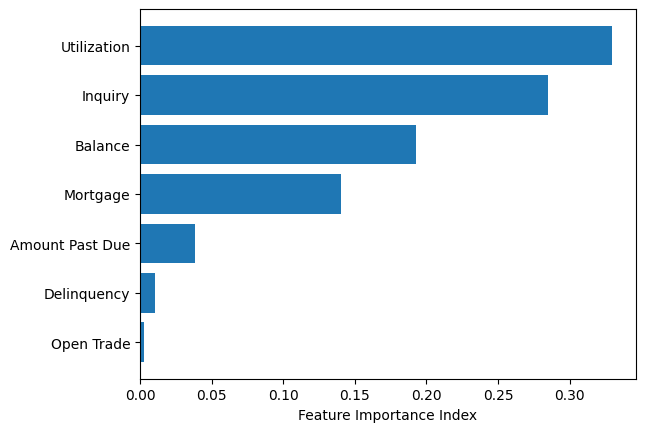}
    \caption{SimuCredit data: feature importance plot of the plain XGB1 model.}
    \label{fig:simuNoMonoFI}
% \end{figure}
\bigskip
% \begin{figure}[htp!]
    \centering
    \begin{subfigure}{0.4\textwidth}
    \includegraphics[width=\textwidth]{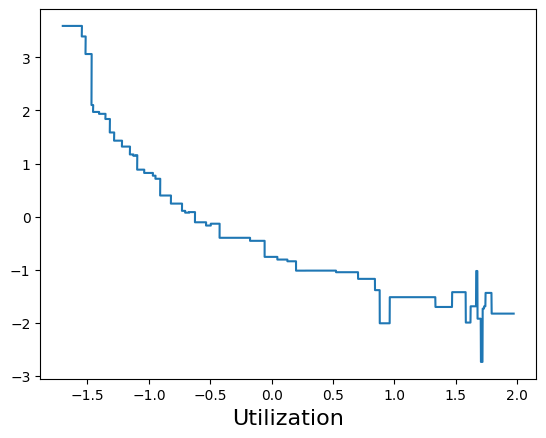}
    \caption{}
    \end{subfigure}
    \begin{subfigure}{0.4\textwidth}
    \includegraphics[width=\textwidth]{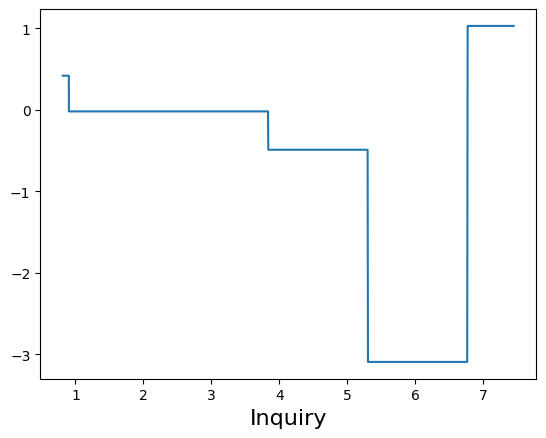}
    \caption{}
    \end{subfigure}
    \begin{subfigure}{0.4\textwidth}
    \includegraphics[width=\textwidth]{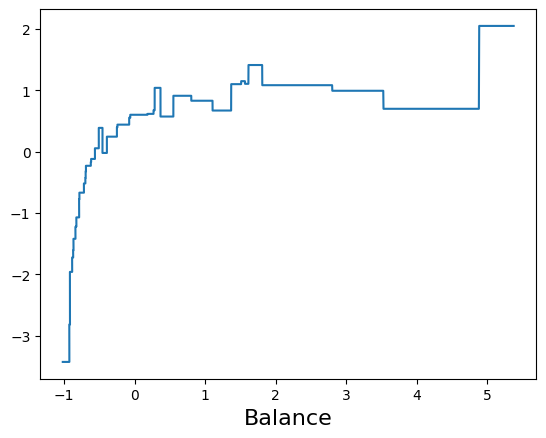}
    \caption{}
    \end{subfigure}
    \begin{subfigure}{0.4\textwidth}
    \includegraphics[width=\textwidth]{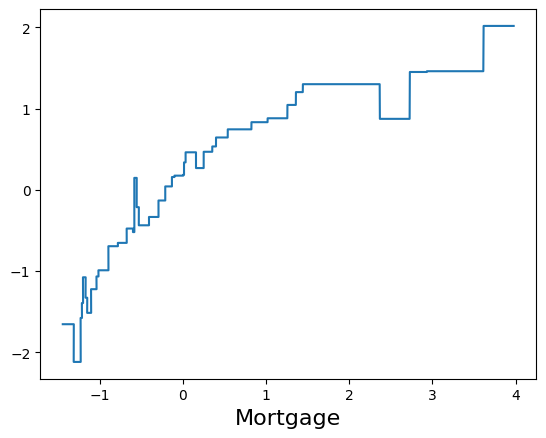}
    \caption{}
    \end{subfigure}
    \caption{SimuCredit data: main effect plots of top features by the plain XGB1 model.}
    \label{fig:simuNoMonoME}
\end{figure}

\begin{figure}[htp!]
    \centering
    \includegraphics[width=0.6\linewidth]{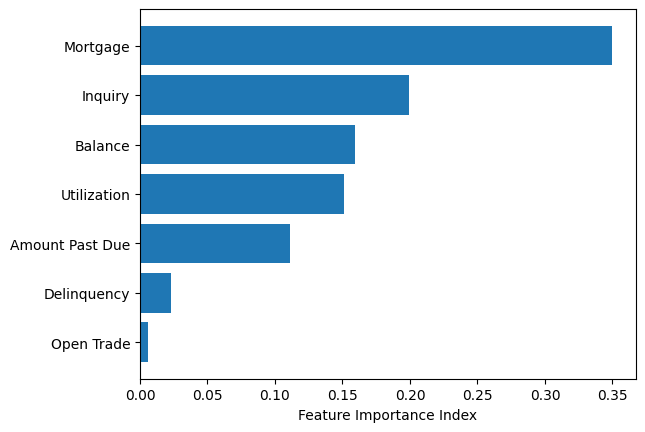}
    \caption{SimuCredit data: feature importance plot of the monotonic XGB1 Model. 
    %Note that {\sf Mortgage} is now the most important, while {\sf Inquiry}, by contrast, is almost half as important.
    }
    \label{fig:simuMonoFI}
% \end{figure}
\bigskip
%\begin{figure}[htp!]
    \centering
    \begin{subfigure}{0.4\textwidth}
    \includegraphics[width=\textwidth]{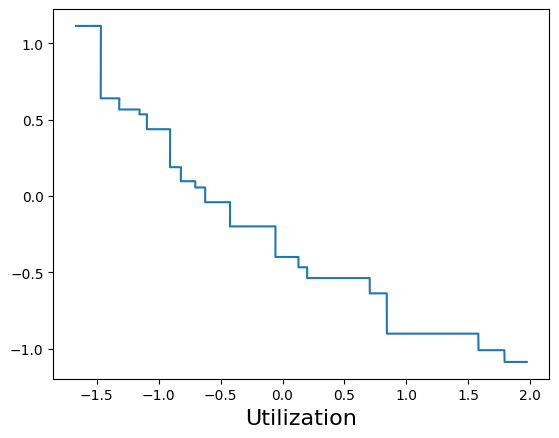}
    \caption{}
    \end{subfigure}
    \begin{subfigure}{0.4\textwidth}
    \includegraphics[width=\textwidth]{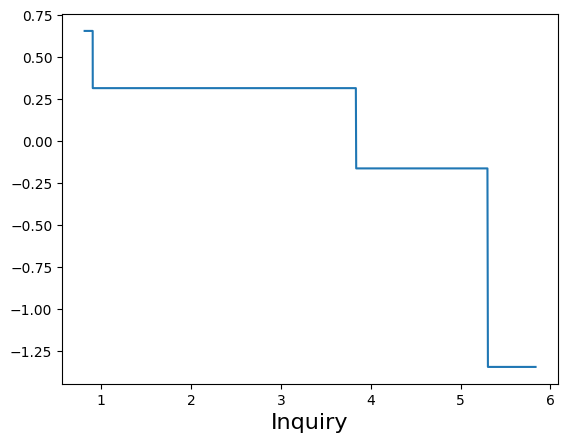}
    \caption{}
    \end{subfigure}
    \begin{subfigure}{0.4\textwidth}
    \includegraphics[width=\textwidth]{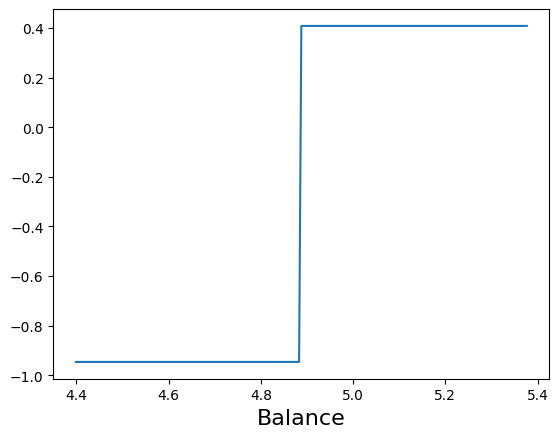}
    \caption{}
    \end{subfigure}
    \begin{subfigure}{0.4\textwidth}
    \includegraphics[width=\textwidth]{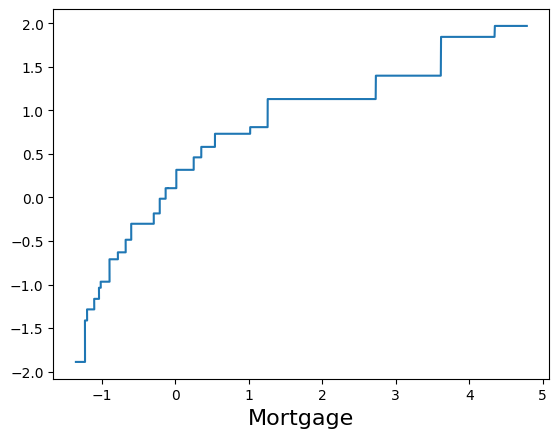}
    \caption{}
    \end{subfigure}
    \caption{SimuCredit data: Main Effect plots of the features, {\sf Utilization}, {\sf Inquiry}, {\sf Balance}, and {\sf Mortgage}, of the monotonic XGB1 model.}
    \label{fig:simuMonoME}
\end{figure}

\item[2)] {\sf Monotonic XGB1 Model.} % \label{sec:simuMono}
As observed from the unconstrained XGB1 model, we impose monotone constraints such that {\sf Balance} and {\sf Mortgage} must be increasing, while {\sf Utilization}, {\sf Inquiry}, {\sf Amount Past Due}, {\sf Delinquency}, and {\sf Open Trade} must be decreasing. 
%We confirm this requirement using model explanation techniques in \cite{sudjianto2023piml}.
Refit the LDA-XGB1 model and we have the feature importance and main effect plot results in Figures \ref{fig:simuMonoFI} and \ref{fig:simuMonoME}, respectively. Note that {\sf Mortgage} becomes far more important with the introduction of monotonic constraints, while {\sf Utilization} loses importance to the point that it's now fourth important. The monotonic shapes of the main effect plots match our expected interpretability based on reality.

Figure~\ref{tab:simuPerf} tabulates the AIR and AUC results for both the plain XGB1 and monotonic XGB1 baseline models in the bottom rows. It is worth noting that for the training data, imposing monotonicity makes XGB1 perform better in AUC and worse in AIR, while for the testing data, the monotonic version has better performances in both AUC and AIR. When the monotonic constraints are imposed wisely, the resulting model can have both better generalization and better interpretability.  
Nevertheless, the AIR values are far from the normal acceptable range $[0.8, 1.2]$, which means both baseline models have disparate treatment to the protected `race' groups.

\begin{figure}
    \centering
    \begin{subfigure}{0.45\linewidth}
        \includegraphics[width=\linewidth]{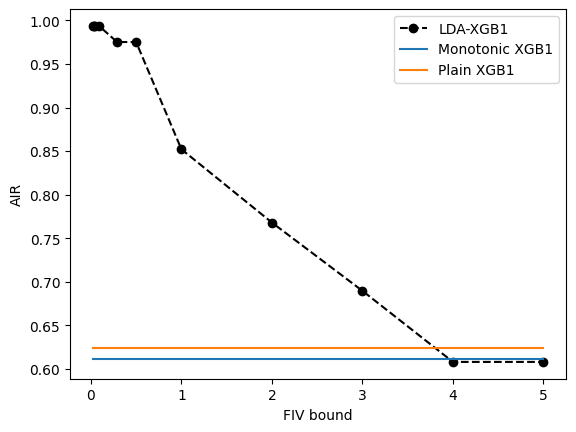}
        \caption{AIR}
        \label{fig:simuMonoAIR}
    \end{subfigure}
    %the AUC plot
    \begin{subfigure}{0.45\linewidth}
        \includegraphics[width=\linewidth]{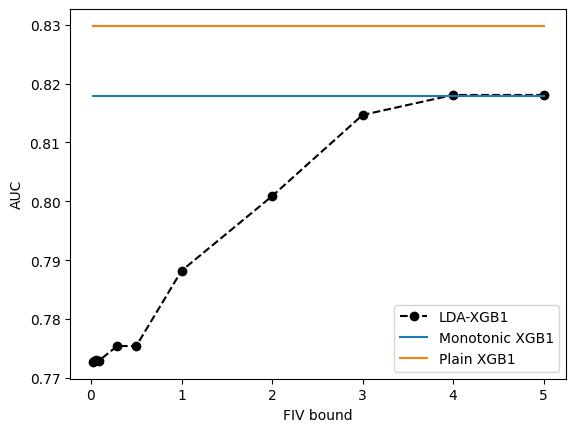}
        \caption{AUC}
        \label{fig:simuMonoFairAUC}
    \end{subfigure}
    % %the Acc plot
    % \begin{subfigure}{0.45\linewidth}
    %     \includegraphics[width=\linewidth]{}
    %     \caption{Accuracy}
    %     \label{fig:simuMonoFairAcc}
    % \end{subfigure}
    \caption{SimuCredit data: LDA-XGB1 with fairness and accuracy trade-off.
    % with varying FIV bounds $\bar\epsilon\in\{0.019, 0.09, 0.29, 0.5, 1, 2, 3, 4, 5\}$. 
    %By dropping $\epsilon$, we can bring AIR to acceptable levels in the interval $[0.8, 1]$, in exchange for only a couple lost points of accuracy.
    }\label{fig:simuPerf} 

\bigskip\bigskip
% \begin{table}[htp!]
    \centering
\begin{tabular}{r|rr|rr}\hline
LDA-XGB1 & AIR Train & AUC Train & AIR Test & AUC Test \\ \hline
5.0 & 0.607805 & 0.820860 & 0.608045 & 0.818078 \\
4.0 & 0.607805 & 0.820860 & 0.608045 & 0.818078 \\
3.0 & 0.704422 & 0.815850 & 0.689610 & 0.814674 \\
2.0 & 0.771618 & 0.801231 & 0.767983 & 0.800871 \\
1.0 & 0.848237 & 0.787234 & 0.852124 & 0.788191 \\
0.5 & 0.984535 & 0.773657 & 0.975157 & 0.775408 \\
0.29 & 0.984535 & 0.773657 & 0.975157 & 0.775408 \\
0.09 & 0.998703 & 0.769939 & 0.993632 & 0.772940 \\
0.05 & 0.998703 & 0.769939 & 0.993632 & 0.772940 \\
0.03 & 0.998703 & 0.769939 & 0.993632 & 0.772940 \\
0.019 & 0.998269 & 0.769843 & 0.993632 & 0.772704 \\\hline
Plain XGB1 & 0.614619 & 0.821284 & 0.611299 & 0.817928 \\
Monotonic XGB1 & 0.609502 & 0.840745 & 0.624152 & 0.829808 \\ \hline
\end{tabular}

\caption{SimuCredit data: AIR and AUC for the LDA-XGB1 models with varying FIV bounds $\bar\epsilon\in\{0.019, 0.09, 0.29, 0.5, 1.0, 2.0, 3.0, 4.0, 5.0\}$, together with baseline XGB1 models without using fairness constraints.} \label{tab:simuPerf}
%\end{table}
\end{figure}

\item[3)] {\sf Fair Monotonic XGB1 Model.}
% Having introduced monotonicity into the sequential model, we note that we also need to consider the AIR. 
Consider the protected/reference groups defined by `race', for which we define the fairness metric \eqref{fairObj}. We run the LGD-XGB1 algorithm with the fairness constraints and varying FIV bounds $\bar\epsilon\in\{0.019, 0.09, 0.29, 0.5, 1, 2, 3, 4, 5\}$. The trade-off between fairness and accuracy are shown in Figure~\ref{fig:simuPerf} for the testing performances, as well as the tabular results in Figure~\ref{tab:simuPerf} for both training and testing performances. It is evident that as FIV bound decreases, the model tends to fairer with higher AIR values, while the prediction performance gets sacrificed with lower AUC values. 
\end{itemize}

In search of LDA models with fairness constraints, suppose the resulting model is desirable to have AIR value at least 0.8. By checking Figure~\ref{fig:simuPerf} or Figure~\ref{tab:simuPerf}, we should choose the FIV bound $\bar\epsilon = 1.0$ with the corresponding model testing performances AIR 0.8521 and AUC 0.7882. Note that we can also run finer FIV bounds between $\bar\epsilon \in [1.0, 2.0]$ to search for a better trade-off. This final model at $\bar\epsilon = 1.0$ can be interpreted in the same way by feature importance and main effect plots, as shown in Figures~\ref{fig:simuMonoFairFI} and \ref{fig:simuMonoFairME}. It can be found that the predictor {\sf Mortgage} is substantially more important when considering fairness, and its binning effect gets coarsened in order to mitigate the bias between the protected and reference `race' groups.

%the FI plots
\begin{figure}[htp!]
    \centering
    \includegraphics[width=0.6\linewidth]{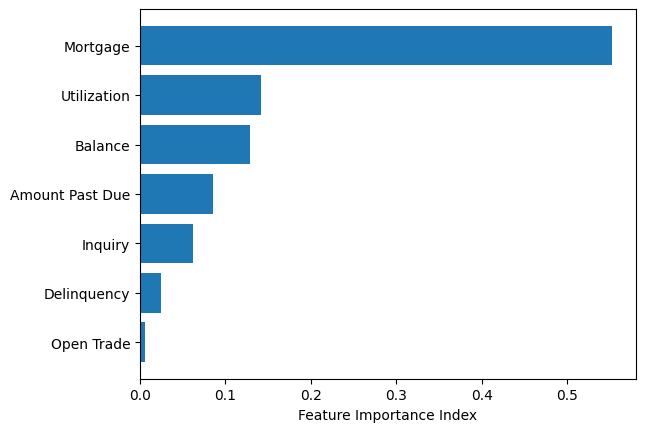}
    \caption{SimuCredit data: feature importance of LDA-XGB1 with FIV bound $\bar\epsilon=1.0$.}
    \label{fig:simuMonoFairFI}
%\end{figure}
\bigskip
%the ME plots
%\begin{figure}[htp!]
    \centering
    \begin{subfigure}{0.4\textwidth}
    \includegraphics[width=\textwidth]{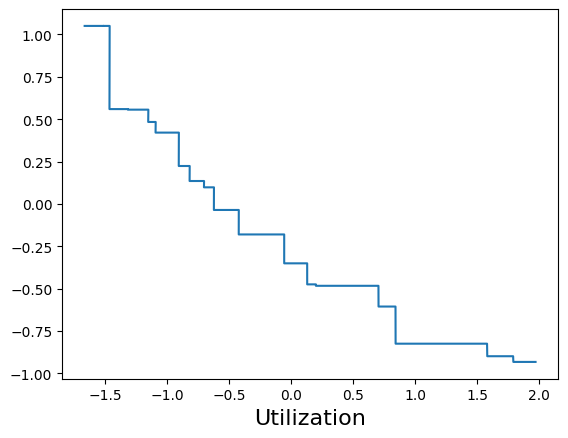}
    \caption{}
    \end{subfigure}
    \begin{subfigure}{0.4\textwidth}
    \includegraphics[width=\textwidth]{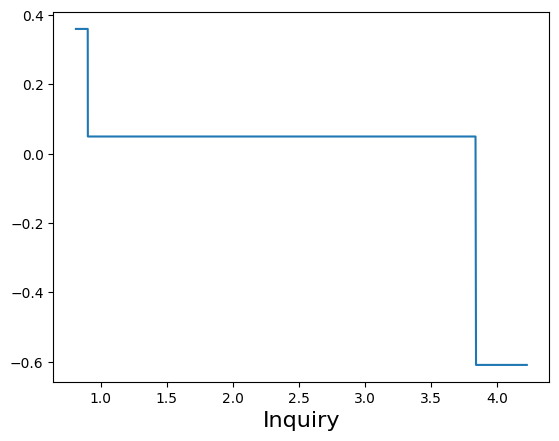}
    \caption{}
    \end{subfigure}
    \begin{subfigure}{0.4\textwidth}
    \includegraphics[width=\textwidth]{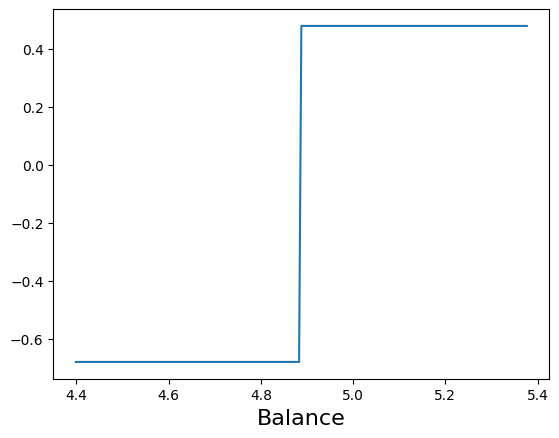}
    \caption{}
    \end{subfigure}
    \begin{subfigure}{0.4\textwidth}
    \includegraphics[width=\textwidth]{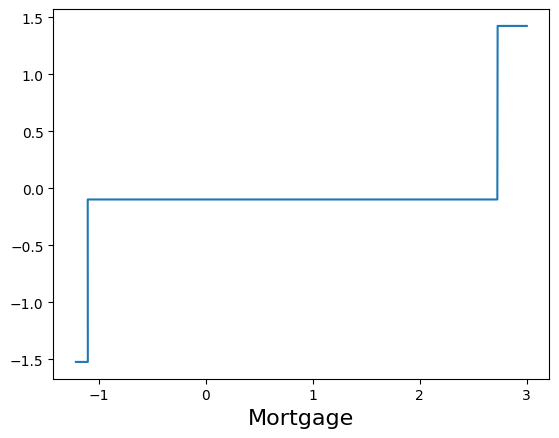}
    \caption{}
    \end{subfigure}
    \caption{ SimuCredit data: main effect plots of the most important features by LDA-XGB1 with FIV bound $\bar\epsilon=1.0$.}
    \label{fig:simuMonoFairME}
\end{figure}

\subsection{COMPAS}
For the Correctional Offender Management Profiling for Alternative Sanctions (COMPAS) dataset \citep{tempeh}, we attempt to predict whether a specific data point will re-commit a crime, i.e., predicting recidivism. The dataset contains the following ten features:
\begin{enumerate}\setlength\itemsep{0em}
    \item {\sf age}: how old the datapoint is
    \item {\sf juv\_fel\_count}: the number of felonies committed as a juvenile
    \item {\sf juv\_misd\_count}: the number of misdemeanours committed as a juvenile
    \item {\sf juv\_other\_count}: the number of other crimes committed as a juvenile
    \item {\sf priors\_count}: the number of crimes committed in the past
    \item {\sf age\_cat\_25 - 45}: a binary variable denoting within the age range $[25, 45]$
    \item {\sf age\_cat\_Greater than 45}: a binary variable denoting within the age range $(45, \infty)$
    \item {\sf age\_cat\_Less than 25}: a binary variable denoting within the age range $[0,25)$
    \item {\sf c\_charge\_degree\_F}: a binary variable denoting if the current charge is a felony
    \item {\sf c\_charge\_degree\_M}: a binary variable denoting if the current charge is a misdemeanour
\end{enumerate}
The {\sf gender} and {\sf race} info are not used for model development and they are saved in a separate file. The target is {\sf two\_year\_recid}, which is 1 if the data point committed another crime within two years of being released. 

Given the complexity of this dataset, we decrease the number of n\_estimators from 1000 to 100 so as to avoid overfitting. Other than that, all the XGBoost hyperparameters are set the same as in Section \ref{sec:simu}.

We again run three rounds of LDA-XGB1 algorithm with the following settings:
\begin{itemize}
\item[1)] {\sf Plain XGB1 Model.}
We applied Algorithm~\ref{alg:fairSequence} without fairness and monotone constraints, fitting a plain XGB1 model, using \emph{optbinning} to merge the bins. The feature importance and main effect plots are shown in Figures~\ref{fig:compasNoMonoFI} and \ref{fig:compasNoMonoME}, respectively. The main effect plots show apparent monotonic patterns for {\sf priors\_count}, and {\sf age}.
\item[2)] {\sf Monotonic XGB1 Model.} 
As observed from the unconstrained XGB1 model, we impose monotone constraints such that {\sf priors\_count}, {\sf juv\_fel\_count}, {\sf juv\_misd\_count}, and {\sf juv\_other\_count} must be increasing, while {\sf age} must be decreasing. 
Refit the LDA-XGB1 model and we have the feature importance and main effect plot results in Figures \ref{fig:compasMonoFI} and \ref{fig:compasMonoME}, respectively. Note that {\sf priors\_count} and {\sf age} both become far more important with the introduction of monotonic constraints, while {\sf juv\_fel\_count} drops enough to change places with {\sf age} and {\sf juv\_misd\_count} loses all importance completely. At the same time, {\sf priors\_count},  and {\sf age} now match our expectation based on reality.
% The Feature Importance plot and the top four Main Effects with monotonicity enforced are in Figures \ref{fig:compasMonoFI} and \ref{fig:compasMonoME} respectively. Note that `priors\_count' and `age' become more important with the introduction of monotonicity constraints, while `juv\_fel\_count' loses importance to the point of being only slightly more important than `juv\_other\_count.'

\begin{figure}[htp!]
    \centering
    \includegraphics[width=0.6\linewidth]{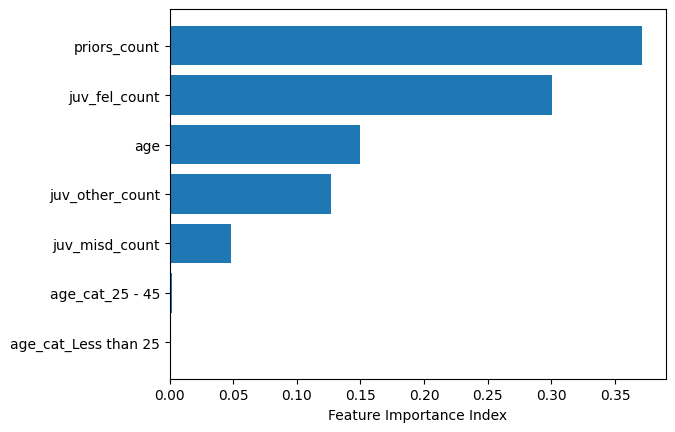}
    \caption{COMPAS data: feature importance plot of the plain XGB1 model. % The top three features are {\sf priors\_count}, {\sf juv\_fel\_count}, and {\sf age}.
    }
    \label{fig:compasNoMonoFI}
%\end{figure}
\bigskip\bigskip\bigskip
%\begin{figure}
    \centering
    \begin{subfigure}{0.32\textwidth}
    \includegraphics[width=\textwidth]{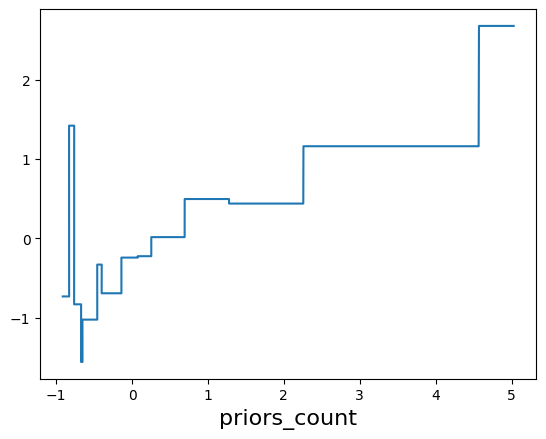}
    \caption{}
    \end{subfigure}
    \begin{subfigure}{0.32\textwidth}
    \includegraphics[width=\textwidth]{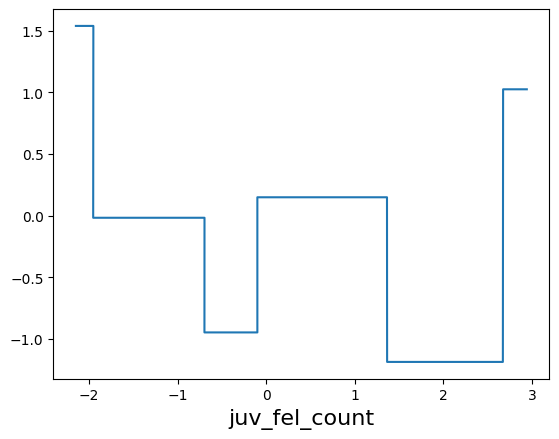}
    \caption{}
    \end{subfigure}
    \begin{subfigure}{0.32\textwidth}
    \includegraphics[width=\textwidth]{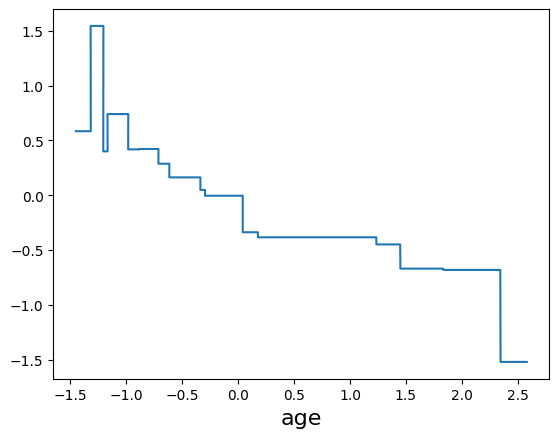}
    \caption{}
    \end{subfigure}
    \caption{COMPAS data: main effect plots of top features of the plain XGB1 Model. 
    %Note the increasing nature of {\sf priors\_count} and decreasing nature of {\sf age}.
    }
    \label{fig:compasNoMonoME}
\end{figure}

\begin{figure}[htp!]
    \centering
    \includegraphics[width=0.6\linewidth]{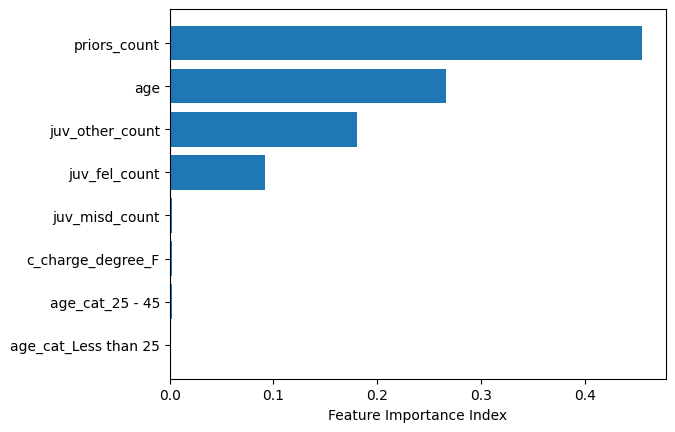}
    \caption{COMPAS data: feature importance plot of the monotonic XGB1 model. 
    %The top three features are now {\sf priors\_count}, {\sf age}, and {\sf juv\_other\_count}, with {\sf juv\_fel\_count} fallen to least important.
    }
    \label{fig:compasMonoFI}
%\end{figure}
\bigskip\bigskip\bigskip
%\begin{figure}
    \centering
    \begin{subfigure}{0.32\textwidth}
    \includegraphics[width=\textwidth]{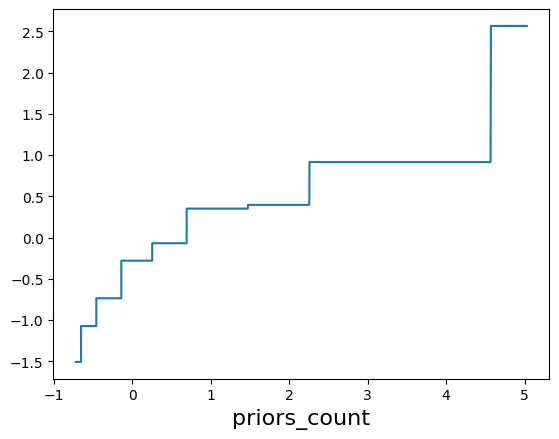}
    \caption{}
    \end{subfigure}
    \begin{subfigure}{0.32\textwidth}
    \includegraphics[width=\textwidth]{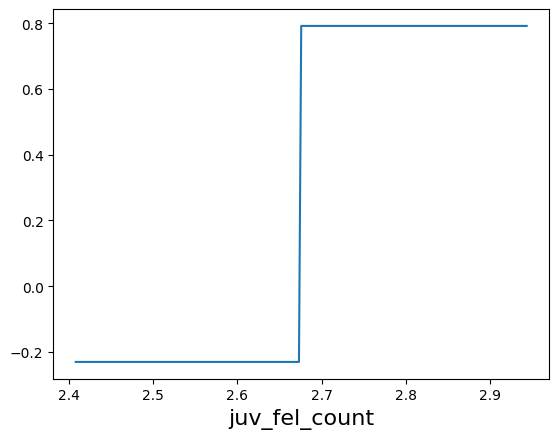}
    \caption{}
    \end{subfigure}
    \begin{subfigure}{0.32\textwidth}
    \includegraphics[width=\textwidth]{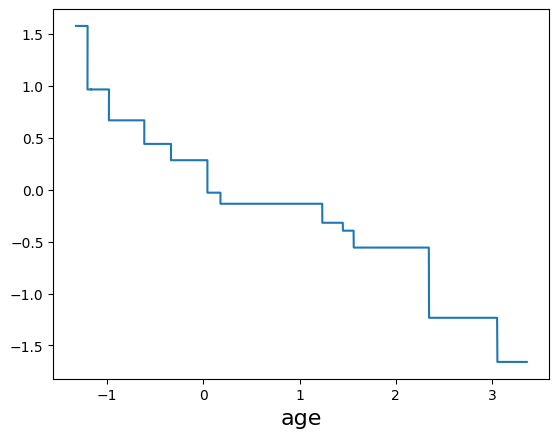}
    \caption{}
    \end{subfigure}
    \caption{COMPAS data: main effect plots of the most important features, {\sf priors\_count}, {\sf juv\_fel\_count}, and {\sf age}, of the monotonic XGB1 Model.}
    \label{fig:compasMonoME}
\end{figure}

\item[3)] {\sf Fair Monotonic XGB1 Model.}
Again consider the protected and reference `race' groups and define the fairness metric \eqref{fairObj}. We run the LGD-XGB1 algorithm with the fairness constraints and varying FIV bounds as shown in Figure~\ref{tab:compasPerf}. The trade-off between fairness and accuracy is shown in Figure~\ref{tab:compasPerf} for the testing performances, as well as the tabular results in Figure~\ref{tab:compasPerf} for both training and testing performances. It is also evident that as FIV bound decreases, the LDA-XGB1 model tends to fairer with higher AIR values, while the prediction accuracy gets sacrificed with lower AUC values. 
\end{itemize}

In our exploration of LDA-XGB1 models with fairness metrics, we observe significant bias in binary classification for the COMPAS dataset. The baseline models demonstrate low AIR values, indicating substantial fairness concerns. When fairness constraints with stringent FIV bounds are applied, the AIR improves, but it comes at the cost of significant reduction in model prediction accuracy. Figures~\ref{fig:compasMonoFairFI} and \ref{fig:compasMonoFairME} show a selected LDA-XGB1 model with FIV $\bar\epsilon = 0.14$ for model interpretation. In this model, where {\sf age} and {\sf priors\_count} emerge as the two most important features, with {\sf age} showing the decreasing trend and {\sf priors\_count} exhibiting an increasing trend.

\begin{figure}[htp!]
    \centering
    \begin{subfigure}{0.45\textwidth}
        \includegraphics[width=\linewidth]{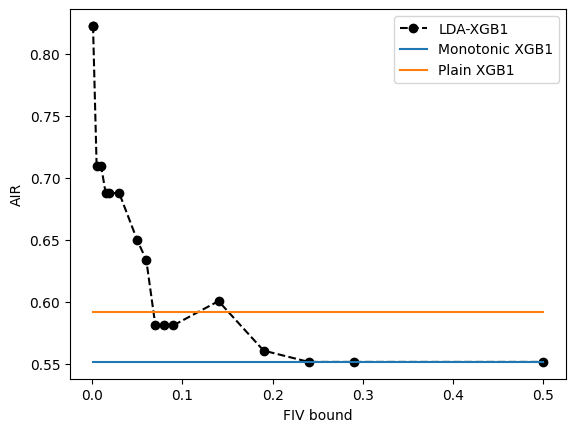}
        \caption{AIR}
        \label{fig:compasMonoAIR}
    \end{subfigure}
    %the AUC plot
    \begin{subfigure}{0.45\textwidth}
        \includegraphics[width=\linewidth]{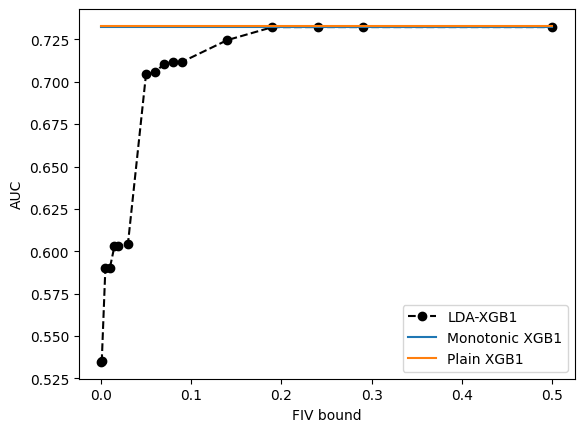}
        \caption{AUC}
        \label{fig:compasMonoFairAUC}
    \end{subfigure}
    % %the Acc plot
    % \begin{subfigure}{0.45\textwidth}
    %     \includegraphics[width=\linewidth]{}
    %     \caption{Accuracy}
    %     \label{fig:compasMonoFairAcc}
    % \end{subfigure}
    \caption{COMPAS data: LDA-XGB1 with fairness and accuracy trade-off. %with sensitive {\sf race} groups, and
    %ith FIV bounds $\bar\epsilon\in(0,0.5]$. 
    %Compromises may be made to achieve minimal performance loss but improvement to the AIR in the $\epsilon\in(0.05,0.15)$ range.
    }
    \label{fig:compasPerf}

\bigskip\bigskip
%\begin{table}[]
    \centering
\begin{tabular}{r|rr|rr}\hline
LDA-XGB1 & AIR Train & AUC Train & AIR Test & AUC Test \\ \hline
0.5 & 0.571335 & 0.739177 & 0.553760 & 0.732118 \\
0.29 & 0.571335 & 0.739177 & 0.553760 & 0.732118 \\
0.24 & 0.571335 & 0.739177 & 0.553760 & 0.732118 \\
0.19 & 0.573842 & 0.738416 & 0.562920 & 0.731789 \\
0.14 & 0.625204 & 0.733396 & 0.598217 & 0.724442 \\
0.09 & 0.631568 & 0.724036 & 0.581194 & 0.711402 \\
0.08 & 0.631568 & 0.724036 & 0.581194 & 0.711402 \\
0.07 & 0.633951 & 0.721050 & 0.578653 & 0.710430 \\
0.06 & 0.680063 & 0.714979 & 0.633895 & 0.705063 \\
0.05 & 0.695318 & 0.713778 & 0.649832 & 0.703991 \\
0.03 & 0.797480 & 0.603075 & 0.687921 & 0.604531 \\
0.019 & 0.797480 & 0.601298 & 0.687921 & 0.602916 \\
0.015 & 0.797480 & 0.601299 & 0.687921 & 0.602908 \\
0.01 & 0.836524 & 0.585866 & 0.709498 & 0.590254 \\
0.005 & 0.836524 & 0.585866 & 0.709498 & 0.590254 \\\hline
Plain XGB1 & 0.573320 & 0.746819 & 0.589983 & 0.732669 \\
Monotonic XGB1 & 0.571335 & 0.739177 & 0.553760 & 0.732118 \\ \hline
\end{tabular}\caption{COMPAS data: AIR and AUC for the LDA-XGB1 models with varying FIV bounds $\bar\epsilon\in(0,0.5]$, together with baseline XGB1 models without using fairness constraints.}
\label{tab:compasPerf}
%\end{table}
\end{figure}

%the FI plots
\begin{figure}[htp!]
    \centering
    \includegraphics[width=0.6\linewidth]{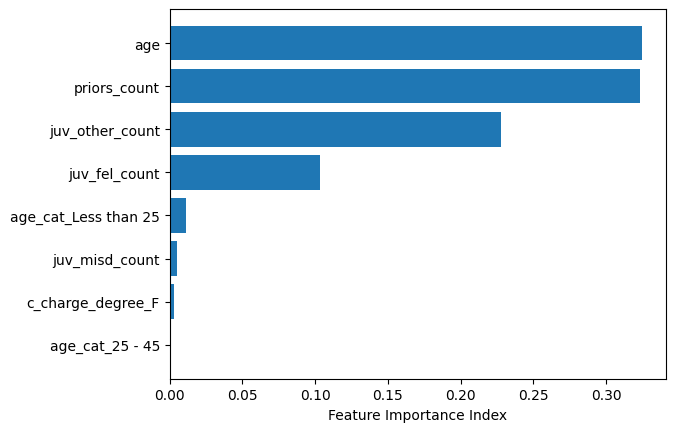}
    \caption{COMPAS data: feature importance of monotonic LDA-XGB1 with FIV bound $\bar\epsilon=0.14$. 
    %Note that 'age' and 'priors\_count' are now of equal importance, with `juv\_misd\_count' increasing importance as well.
    }
    \label{fig:compasMonoFairFI}
%\end{figure}
\bigskip
%the ME plots
%\bigskip\begin{figure}
    \centering
    \begin{subfigure}{0.32\textwidth}
    \includegraphics[width=\textwidth]{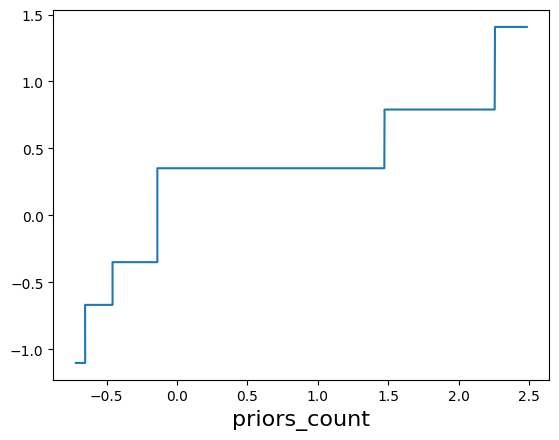}
    \caption{}
    \end{subfigure}
    \begin{subfigure}{0.32\textwidth}
    \includegraphics[width=\textwidth]{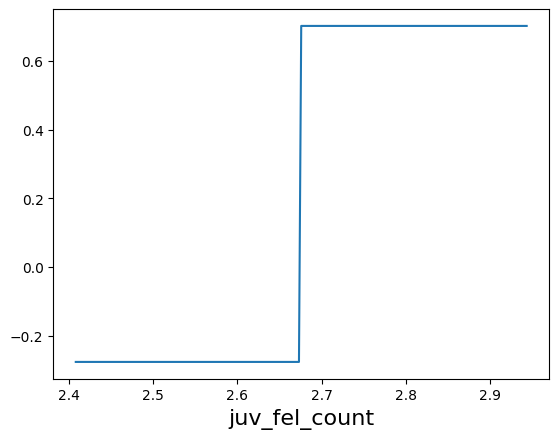}
    \caption{}
    \end{subfigure}
    \begin{subfigure}{0.32\textwidth}
    \includegraphics[width=\textwidth]{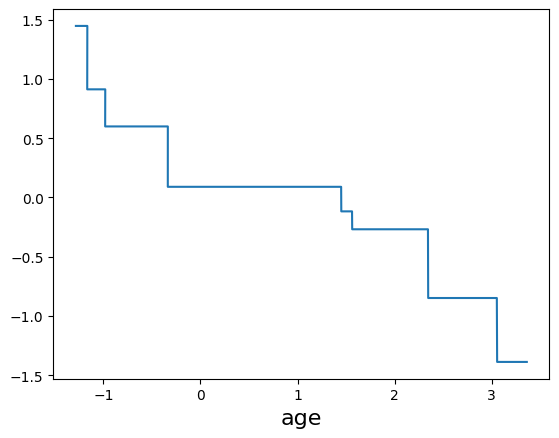}
    \caption{}
    \end{subfigure}
    \caption{COMPAS data: main effect plots of the most important features by LDA-XGB1 with FIV bound $\bar\epsilon=0.14$. %Note the coarsening of `age' due to its relevance to the sensitive data.
    }
    \label{fig:compasMonoFairME}
\end{figure}

\section{Conclusion}
In this work, we introduced LDA-XGB1, a novel framework for developing less discriminatory and interpretable machine learning models for binary classification. By leveraging the predictive strength of XGBoost while integrating fairness constraints and monotonicity for interpretability, LDA-XGB1 achieves an effective balance between accuracy and fairness, particularly in high-stakes applications like financial lending and criminal justice.

Using both simulated (SimuCredit) and real-world (COMPAS) datasets, we demonstrated that LDA-XGB1 can mitigate biases against protected groups. Our biobjective optimization framework, which balances accuracy and fairness, allows for the flexible tuning of trade-offs, making LDA-XGB1 adaptable to various practical use cases.

\subsubsection*{Impact on the Financial Industry}
The adoption of LDA-XGB1 presents a promising pathway for financial institutions to comply with regulatory standards, such as the CFPB’s requirements for fair lending practices and model transparency. 
Recently, CFPB provided comments in response to the U.S. Department of the Treasury’s Request for Information on the use of artificial intelligence (AI) in the financial services sector \citep{CFPB2024}, The CFPB emphasized that AI adoption must comply with existing consumer protection laws, including the Equal Credit Opportunity Act (ECOA) and the Consumer Financial Protection Act (CFPA). These laws prohibit discrimination and ensure transparency in AI-driven decision-making processes, particularly in areas like lending and fraud detection.

The proposed LDA-XGB1 framework aligns well with CFPB’s emphasis on less discriminatory alternatives and the importance of model interpretability. 
As machine learning continues to transform decision-making processes in the financial sector, LDA-XGB1 can serve as a standard for incorporating ethical considerations into predictive modeling.

\subsubsection*{Closing Remarks}
In summary, LDA-XGB1 not only addresses current regulatory demands for fairness and interpretability but also sets the stage for future research on integrating fairness constraints into machine learning algorithms. As regulatory standards evolve and the demand for less discriminatory alternatives, the LDA-XGB1 framework can be expanded and refined to incorporate multiple fairness metrics, two-dimensional optbinning strategy for two-way interaction effects, and more complex use cases.

\section*{Acknowledgement}
This work is supported through a generous gift from the Wells Fargo Bank and partial funding from the Center for Trustworthy Artificial Intelligence through
Model Risk Management (TAIMing AI) at UNC Charlotte which is funded through the Division of Research, the School of Data Science, and the Klein College of Science.

\appendix
\section*{Appendix}\label{sec:app}
We present the result that \eqref{objILP} is a linear representation of the IV as a theorem, along with two detailed proofs thereof, one by induction, and one directly.
\begin{thm}
    The representation of the IV, \eqref{IV2}, which is nonlinear in $X$, is equivalent to the linear representation, \eqref{objILP}.
\end{thm}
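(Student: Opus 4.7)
The plan is to exploit the combinatorial structure that the optbinning feasibility constraints impose on $X$, so that both expressions collapse to the same sum of IVs, one per induced superbin. The first step is to recast feasibility: the rules that noncontiguous bins cannot be merged without the bins in between, that each pre-bin lies in exactly one superbin, and the standard convention that $X_{ij}$ records ``bin $j$ lies in the superbin that ends at position $i$'' together force the following row structure. For each $i$, the row $(X_{i,1},\dots,X_{i,i})$ is either identically zero (when $i$ is not the right endpoint of any superbin) or has the form $(0,\dots,0,1,\dots,1)$ with the $1$'s in positions $j^\ast(i),\dots,i$ for the unique $j^\ast(i)\le i$ at which the superbin ending at $i$ starts.

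Given this row structure, I would evaluate the $i$-th summand of \eqref{IV2} and \eqref{objILP} case by case and check they agree. For \eqref{IV2}, a zero row annihilates the term (using the usual convention $0\log 0 = 0$ for the degenerate log), while an endpoint row reduces $\sum_{j=1}^i(p_j-q_j)X_{ij}$ to $\sum_{z=j^\ast(i)}^i(p_z-q_z)$ and the log argument to $\sum_{z=j^\ast(i)}^i p_z\,/\,\sum_{z=j^\ast(i)}^i q_z$, so the $i$-th term equals $V_{i,j^\ast(i)}$. For \eqref{objILP}, a zero row again contributes nothing; a singleton superbin ($j^\ast(i)=i$) contributes $V_{ii}X_{ii}=V_{ii}=V_{i,j^\ast(i)}$; and a multi-bin superbin contributes
\[
V_{ii} + \sum_{j=j^\ast(i)}^{i-1}\bigl(V_{ij}-V_{i,j+1}\bigr) = V_{ii} + \bigl(V_{i,j^\ast(i)}-V_{ii}\bigr) = V_{i,j^\ast(i)},
\]
where the middle equality is a direct telescoping. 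Summing over $i$ then matches both representations term by term.

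If an explicit induction is desired, as the paper indicates, I would induct on the length $\ell = i-j^\ast(i)+1$ of the superbin ending at $i$. The base case $\ell=1$ reduces to $V_{ii}X_{ii}=V_{ii}$ on both sides. For the inductive step I would compare the expression at length $\ell+1$ with the length-$\ell$ expression plus the new summand $(V_{i,j^\ast(i)}-V_{i,j^\ast(i)+1})X_{i,j^\ast(i)}$ in \eqref{objILP} and check, using the telescoping identity, that it matches the updated $V_{i,j^\ast(i)}$ appearing in \eqref{IV2}.

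The only real obstacle is the first step: clearly pinning down the row structure implied by the optbinning constraints and justifying the convention that handles rows with all $X_{ij}=0$, so that the $0\log(0/0)$ artifact in \eqref{IV2} is unambiguously zero. Once this bookkeeping is in place, the remainder is a one-line telescoping argument, so the proof is essentially a careful translation between a nonlinear encoding of the partition and its linear, difference-of-$V_{ij}$ encoding.
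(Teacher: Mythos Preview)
Your proposal is correct and rests on the same two ingredients as the paper's proof: the row structure forced by the optbinning constraints (each row of $X$ is either identically zero or a contiguous block of ones ending at the diagonal) and the telescoping identity $V_{ii}+\sum_{j=k}^{i-1}(V_{ij}-V_{i,j+1})=V_{ik}$. The only cosmetic difference is that the paper packages this as an induction on the number $n$ of pre-bins rather than your direct term-by-term comparison (or your alternative induction on superbin length), and the paper is slightly less explicit than you are about the all-zero rows and the $0\log(0/0)$ convention.
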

\begin{proof}
    We proceed by induction. Suppose that there is a specific set (possibly infinite) of bins of pre-determined length. Let $k\in\N$ be some number of those bins, and denote the IV in \eqref{IV2} and proposed objective function in \eqref{objILP} for bins $1$ through $k$ as follows: \[IV_k(X):= \sum_{i=1}^k\left(\left(\sum_{j=1}^i(p_j-q_j)X_{ij}\right)\log\left(\dfrac{\sum_{j=1}^i p_jX_{ij}}{\sum_{j=1}^i q_iX_{ij}}\right)\right),\]
    \begin{align}
    f_k(X):=\sum_{i=1}^k \left(V_{ii}X_{ii} + \sum_{j=1}^{i-1}(V_{ij}-V_{ij+1})X_{ij}\right).\label{kObj}
    \end{align}
    Note that \[IV_{k+1}(X)=IV_k(X)+\left(\left(\sum_{j=1}^{k+1}(p_j-q_j)X_{{k+1}j}\right)\log\left(\dfrac{\sum_{j=1}^{k+1} p_jX_{{k+1}j}}{\sum_{j=1}^{k+1} q_jX_{{k+1}j}}\right)\right).\]
    Further, for a given row of matrix $[V_{ij}]$, let the IV over that row be denoted as \[V_{i\cdot}(X):=V_{ii}X_{ii} + \sum_{j=1}^{i-1}(V_{ij}-V_{ij+1})X_{ij}.\] Then \eqref{kObj} can be represented as \[f_k(X):=\sum_{i=1}^k V_{i\cdot}(X).\]

Consider the base case $n=2$. Then there are two cases: the bins are separate, that is, $X_{11}=X_{22}=1$, $X_{21}=0$; and the bins are merged, that is, $X_{11}=0$, $X_{21}=X_{22}=1$.

If the bins are separate, $X_{11}=X_{22}=1$, $X_{21}=0$, then \[IV_2(X)=(p_1-q_1)\log\left(\dfrac{p_1}{q_1}\right)+(p_2-q_2)\log\left(\dfrac{p_2}{q_2}\right)=V_{11}+V_{22}=f(X).\]

Similarly, if the bins are merged, $X_{11}=0$, $X_{21}=X_{22}=1$, then \[IV_2(X)=(p_1-q_1+p_2-q_2)\log\left(\dfrac{p_1+p_2}{q_1+q_2}\right)=V_{12}=V_{22}+V_{21}-V_{22}=f(X).\]

Suppose the induction step, that is, suppose $IV_{n-1}(X)=f_{n-1}(X)$. Since \[IV_{n}(X)=IV_{n-1}(X)+\left(\sum_{j=1}^n(p_j-q_j)X_{nj}\log\left(\dfrac{\sum_{j=1}^{n} p_jX_{{n}j}}{\sum_{j=1}^{n} q_{j}X_{{n}j}}\right)\right),\] we need only show that 
\begin{align}
    V_{n\cdot}(X)&=\sum_{j=1}^n(p_j-q_j)X_{nj}\log\left(\dfrac{\sum_{j=1}^{n} p_jX_{{n}j}}{\sum_{j=1}^{n} q_{j}X_{{n}j}}\right) \label{concEq}
\end{align}
Based on the constraint that a merged bin must be continuous, we have that, for each $i=1,\dots,n$, there exists $k\in\{1,\dots,n\}$ such that $X_{ij}=0$ for all $j< k$ and $X_{ij}=1$ for all $k \leq j\leq i$. Then \begin{align*}
    V_{i\cdot}(X)&=V_{ii}X_{ii} + \sum_{j=1}^{i-1}(V_{ij}-V_{ij+1})X_{ij}\\
    &=V_{ii}+\sum_{j=k}^{i-1}(V_{ij}-V_{ij+1})\\
    &=V_{ik}.
\end{align*}
Therefore $V_{n\cdot}=V_{nk}$. At the same time, since $X$ is fixed, we get that 
\begin{align*}
    \sum_{j=1}^n(p_j-q_j)X_{nj}\log\left(\dfrac{\sum_{j=1}^{n} p_jX_{{n}j}}{\sum_{j=1}^{n} q_{j}X_{{n}j}}\right)&=\sum_{j=k}^n(p_j-q_j)\log\left(\dfrac{\sum_{j=k}^{n} p_j}{\sum_{j=k}^{n} q_{j}}\right)\\
    &= V_{nk}\\
    &= V_{n\cdot}. 
\end{align*}
Consequently, $IV_n(X)=f_n(X)$, and the proof by induction is complete.
\end{proof}


\begin{thebibliography}{}
\bibitem[Agarwal et~al., 2021]{agarwal2021nam}
Agarwal, R., Melnick, L., Frosst, N., Zhang, X., Lengerich, B., Caruana, R. and Hinton, G. E. (2021). Neural additive models: Interpretable machine learning with neural nets. {\em Advances in Neural Information Processing Systems}, {\bf 34}, 4699--4711.

\bibitem[Bilodeau et~al., 2024]{bilodeau2024impossibility}
Bilodeau, B., Jaques, N., Koh, P. W. and Kim, B. (2024). Impossibility theorems for feature attribution. {\em Proceedings of the National Academy of Sciences}, 121(2), e2304406120.

\bibitem[Caton and Hass, 2024]{caton2024fairnss}
Caton, S. and Haas, C. (2024). Fairness in machine learning: A survey. {\em ACM Computing Surveys}, {\bf 56}(7), 1--38.

\bibitem[Chen and Guestrin, 2016]{chen2016xgboost}
Chen, T, and Guestrin, C. (2016). 
Xgboost: A scalable tree boosting system.
{\em Proceedings of the 22nd ACM SIGKDD international conference on knowledge discovery and data mining} (pp. 785--794). 

\bibitem[CFPB, 2022]{CFPB2022-03}
Consumer Financial Protection Bureau (2022). Consumer Financial Protection Circular 2022-03: Adverse action notification requirements in connection with credit decisions based on complex algorithms. May 26, 2022. 

\bibitem[Donini et~al., 2018]{donini2018fairnss}
Donini, M., Oneto, L., Ben-David, S., Shawe-Taylor, J. S. and Pontil, M. (2018). Empirical risk minimization under fairness constraints. {\em Advances in Neural Information Processing Systems}, 31.

\bibitem[Frotman and Meyer, 2024]{CFPB2024}
Frotman, S. and Meyer, E. (2024). CFPB Comment on Request for Information on Uses, Opportunities, and Risks of Artificial Intelligence in the Financial Services Sector. August 12, 2024.

\bibitem[Gillis et~al., 2024]{gillis2024lda}
Gillis, T. B., Meursault, V. and Ustun, B. (2024). Operationalizing the Search for Less Discriminatory Alternatives in Fair Lending. In {\em The 2024 ACM Conference on Fairness, Accountability, and Transparency} (pp. 377--387).

\bibitem[Haimes, 1971]{haimes1971mop}
Haimes, Y. (1971). On a bicriterion formulation of the problems of integrated system identification and system optimization. {\em IEEE transactions on systems, man, and cybernetics}, SMC-1 (3), 296--297.

\bibitem[Hastie and Tibshirani, 1990]{hastie1990}
Hastie, T. J. and Tibshirani, R. J. (1990). {\em Generalized Additive Models} (Vol. 43). CRC Press.

\bibitem[Hu et~al., 2023]{hu2023fanova}
Hu, L., Nair, V.~N., Sudjianto, A., Zhang, A. and Chen, J. (2023).
\newblock Interpretable machine learning based on functional ANOVA framework: algorithms and comparisons.
\newblock {\em arXiv preprint:2305.15670}.

\bibitem[Jeffreys, 1946]{jeffreys1946invariant}
Jeffreys, Harold (1946). An Invariant Form for the Prior Probability in Estimation Problems. {\em Proceedings of the Royal Society of London. Series A. Mathematical and Physical Sciences} (Vol. 186), 453--461.

\bibitem[Kullback and Liebler, 1951]{kullback1951information}
Kullback, Solomon and Leibler, Richard A. (1951). On Information and Sufficiency. {\em The Annals of Mathematical Statistics} (Vol. 22), 79--86.

\bibitem[Kumar et~al., 2020]{kumar2020prob}
Kumar, I. E., Venkatasubramanian, S., Scheidegger, C. and Friedler, S. (2020). Problems with Shapley-value-based explanations as feature importance measures. In {\em Proceedings of the 37th International Conference on Machine Learning}, PMLR, 119 (pp. 5491--5500).

\bibitem[Lundberg and Lee, 2017]{lundberg2017shap}
Lundberg, S.~M. and Lee, S.-I. (2017).
\newblock A unified approach to interpreting model predictions.
\newblock {\em Advances in Neural Information Processing Systems}, 30.

\bibitem[Lutz, 2020]{tempeh}
Lutz, Roman (2020). {TEst Machine learning PErformance exHaustively}. \url{https://pypi.org/project/tempeh/} Note: accessed February 9, 2024.

% \bibitem[Molnar et~al., 2020]{molnar2020pitfall} 
% Molnar, C., König, G., Herbinger, J., Freiesleben, T., Dandl, S., Scholbeck, C. A.
% Casalicchio G., Grosse-Wentrup, M. and Bischl, B. (2020). General pitfalls of model-agnostic interpretation methods for machine learning models. In {\em International Workshop on Extending Explainable AI Beyond Deep Models and Classifiers} (pp. 39--68). Springer.

\bibitem[Mironchyk and Tchistiakov, 2017]{Mironchyk2017}
Mironchyk, Pavel and Tchistiakov, Victor (2017). Monotone Optimal Binning Algorithm for Credit Risk Modeling. DOI: 10.13140/RG.2.2.31885.44003

\bibitem[Navas-Palencia, 2020]{navas2020optimal}
Navas-Palencia, G. (2020). Optimal binning: mathematical programming formulation. 
{\em arXiv preprint:2001.08025}.

\bibitem[Ribeiro et~al., 2016]{ribeiro2016lime}
Ribeiro, M.~T., Singh, S., and Guestrin, C. (2016).
\newblock ``Why should i trust you?'' {E}xplaining the predictions of any
  classifier.
\newblock In {\em Proceedings of the 22nd ACM SIGKDD International Conference
  on Knowledge Discovery and Data Mining} (pp. 1135--1144).

\bibitem[Siddiqi, 2006]{siddiqi2006credit}
Siddiqi, Naeem (2006). {\em Credit {R}isk {S}corecards: {D}eveloping and {I}mplementing {I}ntelligent {C}redit {S}coring}. John Wiley \& Sons. 

\bibitem[Sudjianto and Zhang, 2021]{sudjianto2021design}
Sudjianto, A. and Zhang, A. (2021). 
Designing inherently interpretable machine learning models. 
{\em arXiv preprint:2111.01743}. 

\bibitem[Sudjianto et~al., 2023]{sudjianto2023piml}
Sudjianto, A., Zhang, A., Yang, Z., Su, Y., and Zeng, N. (2023). PiML toolbox for interpretable machine learning model development and diagnostics. 
{\em arXiv preprint:2305.04214.}

\bibitem[Vaughan et~al., 2018]{vaughan1018xnn}
Vaughan, J., Sudjianto, A., Brahimi, E., Chen, J. and Nair, V. N. (2018). Explainable neural networks based on additive index models. {\em arXiv preprint:1806.01933}.

\bibitem[Yang et~al., 2020]{yang2020exnn}
Yang, Z., Zhang, A. and Sudjianto, A. (2020).
\newblock Enhancing explainability of neural networks through architecture constraints.
\newblock {\em IEEE Transactions on Neural Networks and Learning Systems}, 32(6):2610--2621.


\bibitem[Yang et~al., 2021]{yang2021gami}
Yang, Z., Zhang, A., and Sudjianto, A. (2021).
\newblock {GAMI-Net}: An explainable neural network based on generalized additive models with structured interactions.
\newblock {\em Pattern Recognition}, 120:108192.

\bibitem[Zafar et~al., 2019]{zafar2019fairnss}
Zafar, M. B., Valera, I., Gomez-Rodriguez, M. and Gummadi, K. P. (2019). Fairness constraints: a flexible approach for fair classification. {\em Journal of Machine Learning Research}, 20(75), 1--42.
\end{thebibliography}
\end{document}